\title{An Ensemble Approach Towards Adversarial Robustness}
\author{%
  Haifeng Qian \\
  IBM T. J. Watson Research Center\\
  Yorktown Heights, NY, USA\\
  \texttt{qianhaifeng@us.ibm.com} \\
}
\begin{document}

\maketitle

\begin{abstract}
  It is a known phenomenon that adversarial robustness comes at a cost to natural accuracy.
  To improve this trade-off, this paper proposes an ensemble approach that divides a complex robust-classification task into simpler subtasks.
  Specifically, \emph{fractal divide} derives multiple training sets from the training data, and \emph{fractal aggregation} combines inference outputs from multiple classifiers that are trained on those sets.
  The resulting ensemble classifiers have a unique property that ensures robustness for an input if certain don't-care conditions are met.
  The new techniques are evaluated on MNIST and Fashion-MNIST, with no adversarial training.
  The MNIST classifier has 99\% natural accuracy, 70\% measured robustness and 36.9\% provable robustness, within $L_2$ distance of 2.
  The Fashion-MNIST classifier has 90\% natural accuracy, 54.5\% measured robustness and 28.2\% provable robustness, within $L_2$ distance of 1.5.
  Both results are new state of the art, and we also present new state-of-the-art binary results on challenging label-pairs.
\end{abstract}

\section{Introduction} \label{sec:intro}

Adversarial examples \cite{szegedy,goodfellow} pose both theoretical questions on how machine-learning models generalize and practical challenges in AI applications.
One can fool facial recognition and impersonate another person by wearing adversarial eyeglass frames \cite{glassframe}; with four small stickers on it, a stop sign gets recognized as a ``speed limit 45'' sign \cite{stopsign}; a Burger King sign got recognized by a Tesla as a stop sign \cite{burger}.
Attacks have also been demonstrated on speech signal \cite{carlini2018audio} and natural language \cite{alzantot2018}.

Adversarial robustness is an extraordinarily difficult problem.
Many proposals have failed when faced with strong attacks \cite{carlini2017,athalye2018,tramer2020}.
One notable success is that an MNIST model achieves human-level robustness with respect to the $L_{\infty}$ metric using adversarial training \cite{mit}.
$L_{\infty}$, however, is not the only metric of interest.
$L_2$ is just as important if not more.
In fact, the aforementioned eyeglass frames, stickers and Burger King sign are all $L_2$ attacks: the distortion has a small $L_2$ norm and a large $L_{\infty}$ norm, and therefore models with $L_{\infty}$ robustness cannot defend against them.
$L_2$ robustness is less understood and perhaps more difficult: for Fashion-MNIST, robustness within $L_2$ distance of 0.88 guarantees robustness within $L_{\infty}$ distance of 8/255, yet $L_{\infty}$ robustness implies little $L_2$ robustness.

A major obstacle is the robustness-accuracy trade-off that seems to exist in all approaches, including adversarial training \cite{tradeoff}, adversarial polytope \cite{wong2018scaling} and $L_2$-nonexpansive neural network (L2NNN) \cite{l2nnn}.
For example, L2NNN achieves substantial provable $L_2$ robustness, yet its natural accuracy on MNIST drops to 98.2\%.
A later work of \cite{nbr} divides a task into simpler subtasks, each of which is solved by an L2NNN with more favorable robustness-accuracy trade-off; it was demonstrated on an MNIST 4-9 classifier that has state-of-the-art $L_2$ robustness with little loss of accuracy.

This paper shares the same strategy of divide-and-conquer as \cite{nbr} and builds on top of its architecture.
A new technique of \emph{fractal divide} replaces how subtasks are formed, and a new technique of \emph{fractal aggregation} replaces how the outputs of subtask classifiers are combined.
We demonstrate new state of the art in $L_2$ robustness on MNIST and Fashion-MNIST, as well as on binary classification.

In principle, fractal divide and fractal aggregation can work with any metric of interest, as long as each subtask classifier has bounded Lipschitz constant with respect to the metric.
This paper focuses on $L_2$ due to the availability of L2NNN.
It's also possible to build each subtask classifier by adversarial training, and we choose L2NNN over adversarial training for its provable guarantees.


\section{Background} \label{sec:bg}



The goal of robust classification is to predict the correct label not only at a data point but also within a ball of a certain radius around it.
The achievable radius depends on the separation of data.
Let $d\left({\bf t}\right)$ be the minimum distance from input ${\bf t}$ to any point with a different label.
$d\left({\bf t}\right)/2$ is referred to as the \emph{oracle robustness} in \cite{nbr}, which reported the statistics on the MNIST training set: the oracle robustness radius is above 2 for 96\% of the images, above 2.5 for 79\% and above 3 for 51\%.
Therefore, $L_2$ radius of 2 is a meaningful threshold to measure MNIST classifiers.
We perform the same measurement on Fashion-MNIST.
The oracle radius is above 1.5 for 95\% of the images, above 2 for 66\% and above 2.5 for 34\%.
Therefore, $L_2$ radius of 1.5 is a meaningful threshold.



L2NNNs use a combination of regularization and architecture choices to ensure that the Lipschitz constant of a network is no greater than 1 \cite{l2nnn}.
Consequently, if the distortion to input is limited by $L_2$ norm of $\varepsilon$, the output won't change by more than $\varepsilon$, and this leads to robustness.
This also enables poorman's adversarial training \cite{nbr}: one may simply increase or decrease L2NNN outputs by an adversarial offset to simulate the worst-case scenario of an attack.
A limitation of L2NNN is the robustness-accuracy trade-off.
The reason is that a task of robust classification is often too complex for a monolithic L2NNN, and this explains why \cite{nbr} achieves a better trade-off by dividing a task.


Neural belief reasoner (NBR) \cite{nbr} is a method for unsupervised learning and it has a variant for classification with reduced complexity.
The variant is an ensemble of $K$ subtask classifiers, which differ from each other by using different frames of discernment or by using different subsets of the training data.
If all subtask classifiers are binary, the overall complexity is linear with respect to $K$.
For example, \{1,7,9\} \{3,5,8\} is a binary frame of discernment for MNIST, and the subtask classifier computes a belief value $b_\mathrm{\scriptscriptstyle I}\in\left[0,1\right]$ for the possible world of $\mathrm{label}\in\left\{0,2,3,4,5,6,8\right\}$ and a belief value of $b_\mathrm{\scriptscriptstyle II}\in\left[0,1\right]$ for the possible world of $\mathrm{label}\in\left\{0,1,2,4,6,7,9\right\}$.
In plain English, the subtask classifier states that, for a given input, it has belief $b_\mathrm{\scriptscriptstyle I}$ that the label is not one of \{1,7,9\} and has belief $b_\mathrm{\scriptscriptstyle II}$ that the label is not one of \{3,5,8\}.
With various frames of discernment, an NBR can gather sufficient knowledge for the overall multiclass classification task, and it reasons about knowledge from subtask classifiers using belief functions \cite{ds}.

This paper follows the same architecture with all binary frames of discernment, and adds two new techniques.
\emph{Fractal divide} replaces how subsets of the training data were formed.
\emph{Fractal aggregation} replaces how subtask classifiers with identical frames of discernment were combined.
We also introduce a new way to aggregate over different frames of discernment, at the end of Section~\ref{sec:aggr}. 
The overall complexity remains linear with respect to $K$.

Each subtask classifier is implemented as a scalar-output L2NNN with a sigmoid unit added at the end.
We'll now list equations of its outputs, i.e., $b_\mathrm{\scriptscriptstyle I}$ and $b_\mathrm{\scriptscriptstyle II}$. 
These are after adjustments for sizes of the training subsets and after dynamic scaling as described in \cite{nbr}: for the $i^\mathrm{th}$ subtask, $1 \leq i \leq K$,
\begin{align}
  b_{\mathrm{\scriptscriptstyle I},i} =&
  \begin{cases}
    \frac{2 \cdot b_i \cdot f_{\mathrm{\scriptscriptstyle I},i} \cdot \left(0.5-x_i\right)}
         {1-0.5 \cdot b_i+b_i \cdot f_{\mathrm{\scriptscriptstyle I},i} \cdot \left(0.5-x_i\right)}, & \text{if } x_i < 0.5 \label{eq:b1}\\
    0, & \text{otherwise}
  \end{cases}\\
  b_{\mathrm{\scriptscriptstyle II},i} =&
  \begin{cases}
    \frac{2 \cdot b_i \cdot f_{\mathrm{\scriptscriptstyle II},i} \cdot \left(x_i-0.5\right)}
         {1-0.5 \cdot b_i+b_i \cdot f_{\mathrm{\scriptscriptstyle II},i} \cdot \left(x_i-0.5\right)}, & \text{if } x_i > 0.5 \label{eq:b2}\\
    0, & \text{otherwise}
  \end{cases}\\
  \mathrm{where}\,&\, x_i\triangleq\mathrm{sigmoid}\left(s_i\cdot G_i\left({\bf t}\right)\right) \label{eq:x}
\end{align}
where ${\bf t}$ is the input datum; $G_i\left(\cdot\right)$ is the L2NNN; $s_i>0$ and $0\leq b_i\leq 1$ are two trainable scalar parameters; $f_{\mathrm{\scriptscriptstyle I},i}$ is the fraction of training data used by this subtask with the first label group, 
and $f_{\mathrm{\scriptscriptstyle II},i}$ is that for the second label group.
For example, if the $i^\mathrm{th}$ subtask has the frame of discernment of \{1,7,9\} \{3,5,8\} and the classifier is trained on 30\% of the data with labels \{1,7,9\} and 60\% of the data with labels \{3,5,8\}, then $f_{\mathrm{\scriptscriptstyle I},i}=0.3$ and $f_{\mathrm{\scriptscriptstyle II},i}=0.6$.
Note that one of (\ref{eq:b1})(\ref{eq:b2}) is zero for any ${\bf t}$.

\section{Fractal divide and fractal aggregation} \label{sec:tech}

A hypothesis in this work is that robustness comes from reasoning.
In classical logic, the truth value of a formula remains unchanged when the truth values of some predicates in it are fixed while the others are don't-cares.
To emulate this phenomenon in classification, two things are needed.
The first is a set of predicates that represent diverse knowledge and that are not easy to change, and this is achieved by fractal divide in Section~\ref{sec:divide} and the L2NNN implementation of individual subtasks.
The second is a formula that combines the predicates and that incorporates uncertainty, and this is achieved by fractal aggregation in Section~\ref{sec:aggr}.

\subsection{Overview of the ensemble method} \label{sec:ensemble}

The presentation focuses on binary classification, where subtasks have the same binary frame of discernment but different subsets of the training data.
The training subsets are formed by fractal divide.
After subtask classifiers are trained, fractal aggregation combines them at inference time.

Multiclass classification is the next-level ensemble: it's an ensemble of binary classifiers with different frames of discernment, each of which may be an ensemble itself.
The unique aspect is how to aggregate across frames of discernment, and this is addressed at the end of Section~\ref{sec:aggr}.

L2NNNs, i.e., $G_i\left(\cdot\right)$'s in (\ref{eq:x}), are trained in the same way as in \cite{nbr} with poor man's adversarial training.
The only difference is that we do not have memorization subtasks, which were used in \cite{nbr} to memorize training data points that are difficult to classify robustly and that are excluded from subsets for training L2NNNs.
The strategy worked for MNIST but it might not generalize well in other tasks.
Instead, in this paper we include the difficult data points in the training subsets but use a smaller adversarial offset for them in poor man's adversarial training.
With this simple trick, we avoid memorization, reduce $K$, and the L2NNNs have more training data.

To train parameters $s_i$ and $b_i$, $1 \leq i \leq K$, in (\ref{eq:b1})(\ref{eq:b2})(\ref{eq:x}), a fraction of training data need to be reserved for them and not used for training L2NNNs.
In this paper, we choose a simple approach of uniform $s_i$'s and uniform $b_i$'s, -- treating them as two hyperparameters rather than $2K$ trainable parameters, -- and the entire training data are used for L2NNNs.
We leave training $s_i$'s and $b_i$'s to future work.

\subsection{Fractal divide} \label{sec:divide}

For clarity, we will refer to the training data for binary classification as data with label I and label II, even though label I/II may correspond to a group of labels, e.g., \{1,7,9\} in the earlier example.

\begin{wrapfigure}{r}{0.35\columnwidth}
  \centering
  \includegraphics[width=0.3\columnwidth]{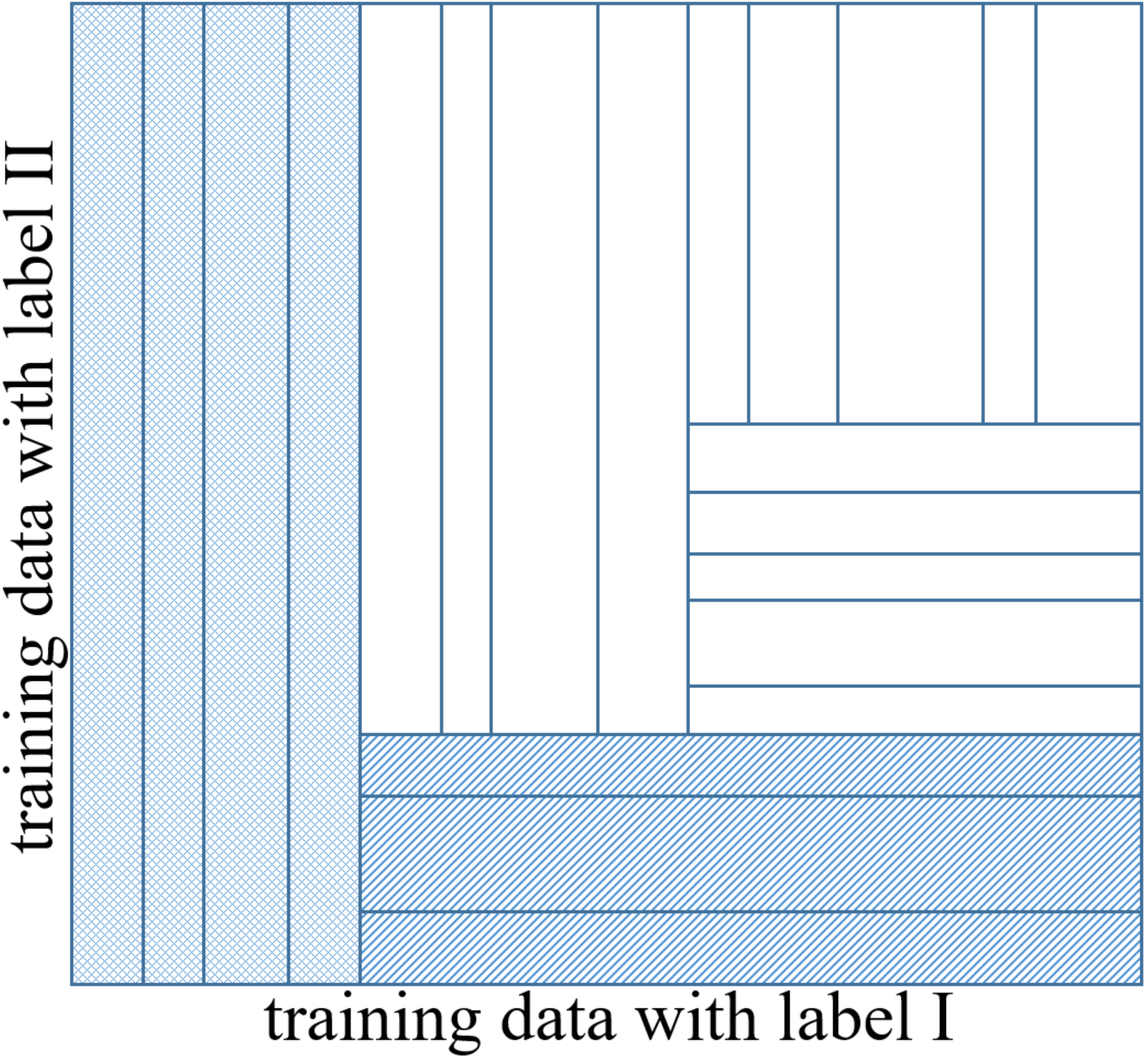}
  \caption{Fractal divide with 5 levels. First 2 levels are highlighted.}
  \label{fig:divide}
  \vspace{-10pt}
\end{wrapfigure}

Figure~\ref{fig:divide} illustrates an example of fractal divide.
Consider the figure as a two-dimensional grid where each x-coordinate corresponds to a datum with label I and each y-coordinate corresponds to a datum with label II.
Each rectangle specifies one training subset: its horizontal span is a subset of data with label I and its vertical span is a subset of data with label II.
There are 21 rectangles in Figure~\ref{fig:divide} and they collectively cover the entire region with no overlap.
In other words, any pair of training data with opposite labels appears in one and only one of the training subsets, and no data pair gets contrasted twice.

We refer to the four rectangles on the left with grid shade as the first-level training subsets.
Each of them is a subtask to distinguish all data with label II against a different subset of data with label I.
We refer to the three lower-right rectangles with stripe shade as the second-level training subsets.
Each of them is a subtask to distinguish the same set of data with label I, which are ones excluded from the first level, against a different subset of data with label II.
This goes on, and Figure~\ref{fig:divide} illustrates a five-level fractal structure.
In theory, fractal divide can have an arbitrary number of levels.
The limit is when subsets in later levels become so small that the trained subtask classifiers generalize poorly.

In the MNIST and Fashion-MNIST classifiers, we use three-level fractal structures.
Building fractal structures requires partitioning training data across levels and within each level.
\begin{itemize}
\item
  For partitioning within a level, subtask classifiers are trained jointly and training data is periodically re-partitioned.
  If this level partitions data with label I, each datum ${\bf t}$ is assigned to the classifier with maximum $G_i\left({\bf t}\right)$, i.e., the classifier with the most robust prediction on ${\bf t}$.
  For partitioning data with label II, the choice is minimum $G_i\left({\bf t}\right)$ instead.
\item
  For partitioning across levels, we first train two one-level fractal structures: the first has $K_1$ classifiers that split data with label I, and the second has $K_2$ classifiers that split data with label II.
  We use the $G_i\left({\bf t}\right)$ values of the former as $K_1$ features and use the k-means algorithm to partition data with label I into two parts, and use the $K_2$ $G_i\left({\bf t}\right)$ values of the latter to partition data with label II into two parts.
  There are eight possible three-level fractal structures.
  Figure~\ref{fig:3lvl} illustrates two, and the other six can be formed by swapping the two parts for label I and/or II.
  One may pick one or train all eight and pick the best.
\end{itemize}

\begin{wrapfigure}{r}{0.5\columnwidth}
  \centering
  \includegraphics[width=0.45\columnwidth]{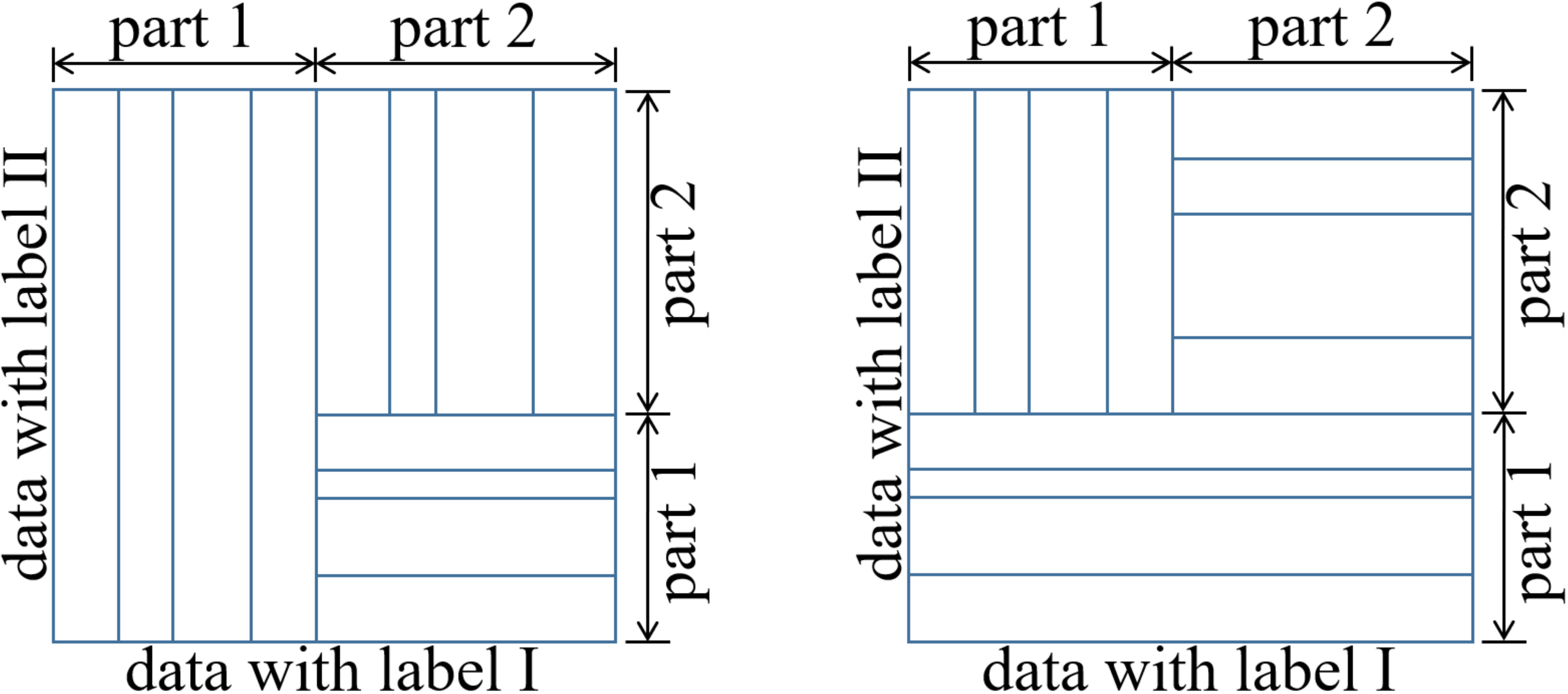}
  \caption{Examples of fractal divide with 3 levels.}
  \label{fig:3lvl}
  \vspace{10pt}
  \includegraphics[width=0.35\columnwidth]{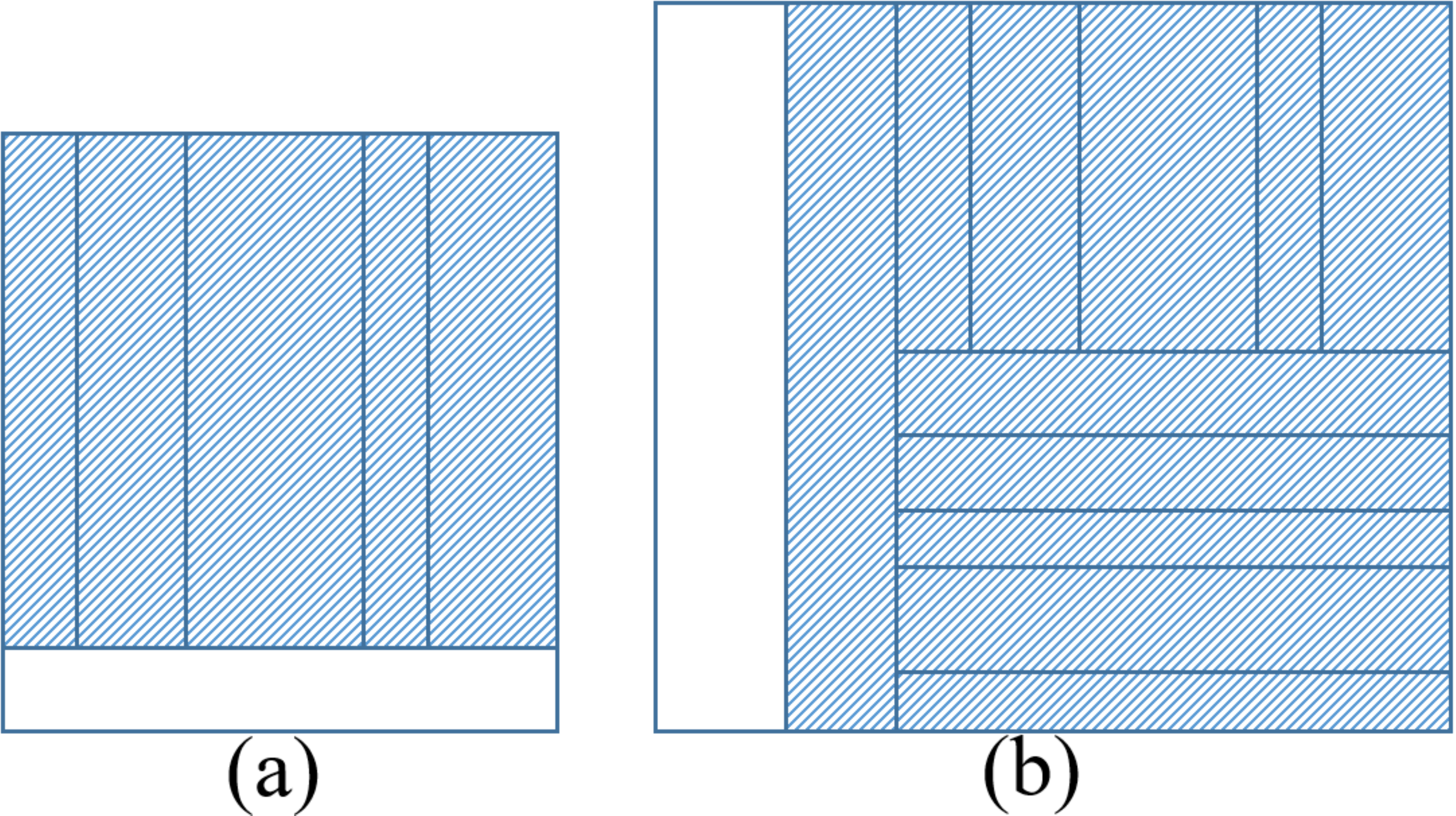}
  \caption{Fractal aggregation's two atomic operations. Each operation combines two classifiers that are represented by the white rectangle and the shaded rectangle. The shaded one may be the result of previous atomic operations.}
  \label{fig:aggr}
  \vspace{-30pt}
\end{wrapfigure}

The intuition is that subtasks should work on data with different characteristics and thereby learn different knowledge.
It's also possible to update cross-level partitioning periodically during training.
When more than three levels are needed, we could partition data recursively.

\subsection{Fractal aggregation} \label{sec:aggr}

To combine the outputs (\ref{eq:b1})(\ref{eq:b2}) of subtask classifiers, let's first perform a simple transformation:
\begin{align}
  w_{\mathrm{\scriptscriptstyle I},i}  =& \log\left(1-b_{\mathrm{\scriptscriptstyle I},i}\right) \label{eq:w1}\\
  w_{\mathrm{\scriptscriptstyle II},i} =& \log\left(1-b_{\mathrm{\scriptscriptstyle II},i}\right) \label{eq:w2}
\end{align}
Intuitively, $w_{\mathrm{\scriptscriptstyle I},i}$ is the negative of weight of evidence \cite{ds} against label I from the $i^\mathrm{th}$ subtask classifier, and $w_{\mathrm{\scriptscriptstyle II},i}$ is that against label II.
The upper limit of a $w$ value is zero, which means that there is no evidence against a label.

Fractal aggregation retraces the structure built by fractal divide: the subtasks in the last level are first combined into one classifier, and then it gets combined with subtasks in the second last level, and so on.
The process can be viewed as a sequence of two atomic operations that are illustrated in Figure~\ref{fig:aggr}.
The first operation combines two classifiers that have been trained on the same subset of data with label I and two disjoint subsets of data with label II.
The second operation is the opposite.
Pictorially, the first operation merges the two rectangles in Figure~\ref{fig:aggr}(a) into one rectangle, while the second operation does so on Figure~\ref{fig:aggr}(b).
Each atomic operation produces a combined classifier that becomes one of the two operands in the next atomic operation.

To derive the atomic operations, we view an operand as two bodies of evidence: some evidence against label I and some evidence against label II.
In the first atomic operation, the evidence against label I from both operands has been derived from the same subset of data with label I, and therefore they are two overlapping bodies of evidence and the combined weight of evidence against label I is the maximum of the two weights.
On the other hand, the evidence against label II from the two operands has been derived from two disjoint subsets of data with label II, and hence the two bodies of evidence against label II can be combined using Dempster's rule of combination \cite{ds}: the resulting weight of evidence is simply the sum of the two weights.
In summary, the first atomic operation is
\begin{equation}
  w_{\mathrm{\scriptscriptstyle I},\mathrm{combined}}  = \min\left(w_{\mathrm{\scriptscriptstyle I},i},w_{\mathrm{\scriptscriptstyle I},j}\right) \,\,,\,\,
  w_{\mathrm{\scriptscriptstyle II},\mathrm{combined}} = w_{\mathrm{\scriptscriptstyle II},i} + w_{\mathrm{\scriptscriptstyle II},j}
  \label{eq:aggr}
\end{equation}
where we use $i,j$ to index operands for brevity, even though one of the two operands may be an ensemble from previous operations and not one of the original subtasks.
(\ref{eq:aggr}) uses min instead of max because $w$'s are the negative of weights of evidence.
The second atomic operation is the opposite:
\begin{equation}
  w_{\mathrm{\scriptscriptstyle I},\mathrm{combined}}  = w_{\mathrm{\scriptscriptstyle I},i} + w_{\mathrm{\scriptscriptstyle I},j} \,\,,\,\,
  w_{\mathrm{\scriptscriptstyle II},\mathrm{combined}} = \min\left(w_{\mathrm{\scriptscriptstyle II},i},w_{\mathrm{\scriptscriptstyle II},j}\right)
  \label{eq:aggr22}
\end{equation}

It is also possible to write fractal aggregation as nested functions.
For example, the final outputs of fractal aggregation for the first fractal structure in Figure~\ref{fig:3lvl} are
\begin{align}
  w_\mathrm{\scriptscriptstyle I}  =& \sum_{i=1}^4 w_{\mathrm{\scriptscriptstyle I},i} + \min\left(\min_{i=5}^8 w_{\mathrm{\scriptscriptstyle I},i},\sum_{i=9}^{12} w_{\mathrm{\scriptscriptstyle I},i} \right) \label{eq:nest1}\\
  w_\mathrm{\scriptscriptstyle II} =& \min\left(\min_{i=1}^4 w_{\mathrm{\scriptscriptstyle II},i}, \sum_{i=5}^{8} w_{\mathrm{\scriptscriptstyle II},i} + \min_{i=9}^{12} w_{\mathrm{\scriptscriptstyle II},i} \right) \label{eq:nest2}
\end{align}
where the subtasks with indices 1--4 are in the first level, 5--8 in the second level and 9-12 in the third.
(\ref{eq:nest1})(\ref{eq:nest2}) can be viewed as a logic formula with uncertainty in belief-function representation, which meets the second goal stated at the beginning of Section~\ref{sec:tech}.
In plain English, (\ref{eq:nest1}) states that the label is not label I if 1) enough first-level classifiers determine so \emph{and} 2) one of the second-level classifiers determines so \emph{or} enough third-level classifiers determine so.
Equation (\ref{eq:nest2}) has a similar interpretation.

For binary classification, the above are the final outputs.
For multiclass classification, we still need to aggregate over binary classifiers with different frames of discernment.
Again, we view each binary classifier as two bodies of evidence: some evidence against the first label group in its frame of discernment and some evidence against the second label group.
For a label $l$, there are multiple bodies of evidence against it, one from each binary classifier where the frame of discernment involves $l$.
Note that each binary classifier is after fractal aggregation, and hence its evidence against $l$ is derived from all training data with label $l$.
Therefore, the multiple bodies of evidence are overlapping, and the combined weight is the maximum of the weights:
\begin{equation}
  w_{\textrm{label}\,l} = \min\left( \min_{j \mid l\in F_j}w_{\mathrm{\scriptscriptstyle I},\mathrm{bin}\,j} , \min_{j \mid l\in F^\prime_j}w_{\mathrm{\scriptscriptstyle II},\mathrm{bin}\,j} \right) \label{eq:multiclass}
\end{equation}
where ``bin $j$'' denotes one of the binary classifiers after fractal aggregation, and $F_j$ and $F^\prime_j$ denotes the two label groups in its frame of discernment.


\subsection{Don't-care conditions} \label{sec:dc}

Let's again consider binary classification and examine an important property of fractal divide and fractal aggregation: an ensemble can be robust on an input when only a subset of its subtask classifiers are robust on this input.
The rest of subtasks only need to satisfy some weak requirements, or no requirement at all in certain scenarios; they are similar to don't-cares in logic.
For this reason, we refer to achieving a controlling subset of robust subtask classifiers for an input as a \emph{don't-care condition}.

Without loss of generality, let's focus on ensembles where a first-level subtask is trained on all data with label II, for example, Figure~\ref{fig:divide} and the first in Figure~\ref{fig:3lvl}.
The discussion applies to other ensembles by swapping labels I and II.
As mentioned in Section~\ref{sec:ensemble}, we assume uniform $b_i$'s in (\ref{eq:b1})(\ref{eq:b2}).

\newtheorem{theorem}{Theorem}
\begin{theorem} \label{th:1lvl}
  A fractal ensemble classifier classifies an input with label I correctly if there exists a first-level subtask classifier $i$ such that $x_i>0.5$ and that $x_j\geq 1-x_i,\forall j\neq i$.
\end{theorem}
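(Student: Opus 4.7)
The plan is to prove the equivalent inequality $w_{\mathrm{I}} > w_{\mathrm{II}}$, which forces the binary classifier to output label I, by sandwiching both quantities around the reference value $w_{\mathrm{II},i^*}$ produced by the distinguished first-level subtask.

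For the upper bound $w_{\mathrm{II}} \leq w_{\mathrm{II},i^*}$: because $i^*$ is first-level and all first-level subtasks share the entire label-II set, they are combined via the type-2 atomic operation (\ref{eq:aggr22}), which applies $\min$ to the $w_{\mathrm{II}}$ values. Moreover the resulting first-level block continues to share all of label II with every subsequent merge, so every outer step is again type-2 in $w_{\mathrm{II}}$. Thus $w_{\mathrm{II},i^*}$ sits inside a nested chain of $\min$'s, and since every $w$ value is non-positive, the bound follows immediately.

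For the lower bound $w_{\mathrm{I}} \geq w_{\mathrm{II},i^*}$, I would proceed by structural induction over the binary aggregation tree. Set $\Delta := x_{i^*} - 0.5 > 0$, and for each sub-ensemble $S$ let $F_{\mathrm{I}}(S)$ denote the fraction of label-I training data covered by $S$ (type-1 mergers preserve this quantity since the operands share label-I data; type-2 mergers add them). The inductive claim is
$$W_{\mathrm{I}}(S) \;\geq\; \log\frac{1 - 0.5b - b\,F_{\mathrm{I}}(S)\,\Delta}{1 - 0.5b + b\,F_{\mathrm{I}}(S)\,\Delta}.$$
The base case plugs the hypothesis $0.5 - x_j \leq \Delta$ (when $x_j < 0.5$) into (\ref{eq:b1}) and uses monotonicity of $b_{\mathrm{I},j}$ in $0.5 - x_j$; the type-1 step is trivial because the two operands share $F_{\mathrm{I}}$ and a $\min$ of two quantities above a common bound stays above it. The main obstacle is the type-2 step, which reduces to the elementary inequality
$$\prod_k \frac{c - d_k}{c + d_k} \;\geq\; \frac{c - \sum_k d_k}{c + \sum_k d_k},$$
valid for positive $c$ and $d_k$ with $c > \sum_k d_k$; I would verify the two-factor case by cross-multiplication (the gap collapses to a positive multiple of $d_1 d_2$) and bootstrap to arbitrary arity by induction.

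Applying the inductive claim at the root of the aggregation tree exploits $F_{\mathrm{I}}(S^\star) = 1$, which holds because fractal divide partitions all label-I data exactly once. Since $f_{\mathrm{II},i^*} = 1$ (forced by $i^*$ being first-level), equation (\ref{eq:b2}) gives $w_{\mathrm{II},i^*} = \log\frac{1 - 0.5b - b\Delta}{1 - 0.5b + b\Delta}$, which matches the right-hand side of the inductive bound at the root. Thus the chain $w_{\mathrm{I}} \geq w_{\mathrm{II},i^*} \geq w_{\mathrm{II}}$ closes, giving the theorem; strictness, needed to produce label I rather than a tie, is generic because the fractional inequality is strict whenever two or more $d_k$ are nonzero, and the remaining equality cases do not arise in any non-trivial multi-subtask fractal structure.
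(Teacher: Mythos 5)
Your proof follows essentially the same route as the paper's: bound $w_{\mathrm{\scriptscriptstyle II}}$ above by $w_{\mathrm{\scriptscriptstyle II},i}$ using $f_{\mathrm{\scriptscriptstyle II},i}=1$, and bound $w_{\mathrm{\scriptscriptstyle I}}$ below by the same quantity via the product inequality $\prod_k (c-d_k)/(c+d_k) \geq (c-\sum_k d_k)/(c+\sum_k d_k)$, which after dividing numerator and denominator by $c$ is exactly the paper's Lemma~\ref{th:lemma}. The only organizational difference is that you run a structural induction over the aggregation tree, whereas the paper observes that for a fixed input the nested min/sum for $w_{\mathrm{\scriptscriptstyle I}}$ collapses to a plain sum $\sum_{j\in\Lambda_{\bf t}} w_{\mathrm{\scriptscriptstyle I},j}$ over subtasks trained on disjoint label-I subsets and applies the lemma once; the two are interchangeable.

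The one real weakness is strictness. Your root bound uses $F_{\mathrm{I}}(S^\star)=1$ and therefore delivers only $w_{\mathrm{\scriptscriptstyle I}} \geq w_{\mathrm{\scriptscriptstyle II},i^*} \geq w_{\mathrm{\scriptscriptstyle II}}$; a tie does not decide the label, and your appeal to genericity (``the equality cases do not arise in any non-trivial structure'') is not a proof --- for instance, if exactly one $x_j$ falls below $0.5$, your product inequality degenerates to an equality. The paper closes this cleanly: since $x_i>0.5$, subtask $i$ contributes $w_{\mathrm{\scriptscriptstyle I},i}=0$ exactly and hence $i\notin\Lambda_{\bf t}$, so the subtasks actually contributing negative terms cover at most a fraction $\sum_{j\in\Lambda_{\bf t}} f_{\mathrm{\scriptscriptstyle I},j} \leq 1-f_{\mathrm{\scriptscriptstyle I},i} < 1$ of the label-I data, and plugging this strict bound into the lemma gives $w_{\mathrm{\scriptscriptstyle I}}$ strictly greater than the $F_{\mathrm{I}}=1$ value (for $b>0$). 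You can graft the same fix onto your induction by excluding $i^*$'s share of the label-I data from $F_{\mathrm{I}}$ of every sub-ensemble containing $i^*$, since that subtask's own term is identically zero under the hypothesis.
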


Note that $x$'s are defined by (\ref{eq:x}).
Theorem~\ref{th:1lvl} states a sufficient condition for a fractal classifier to be correct: one first-level subtask classifier is correct and the others are not very wrong.
The second part of the condition becomes trivial and can be dropped if $x_i$ is 1, which is approximately satisfied when $G_i\left({\bf t}\right)$ is sufficiently large and $s_i$ is sufficiently large.
In other words, if just one L2NNN in the first level has enough confidence on input ${\bf t}$ such that the sigmoid in (\ref{eq:x}) is in the saturation region, the ensemble predicts the correct label I regardless of the other L2NNNs.

Theorem~\ref{th:1lvl} has direct implications on robustness.
Due to the nature of L2NNN, an adversarial example ${\bf t^\prime}$ can only reduce $G_i\left({\bf t^\prime}\right)$ from $G_i\left({\bf t}\right)$ by a limited amount.
If $G_i\left({\bf t^\prime}\right)$ is still positive and enough for (\ref{eq:x}) to stay in the saturation region, the fractal classifier is guaranteed to be robust on ${\bf t}$.

To help proving Theorem~\ref{th:1lvl}, we need Lemma~\ref{th:lemma} and its proof is in the appendix.

\newtheorem{lemma}{Lemma}
\begin{lemma} \label{th:lemma}
  For $\alpha_j \geq 0$, $1\leq j \leq n$, such that $\sum_{j=1}^n \alpha_j \leq 1 $, this inequality holds:
  \begin{equation}
    \prod_{j=1}^n \frac{1-\alpha_j}{1+\alpha_j} \geq \frac{1-\sum_{j=1}^n\alpha_j}{1+\sum_{j=1}^n\alpha_j}
  \end{equation}
\end{lemma}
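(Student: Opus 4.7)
The plan is to prove the lemma by induction on $n$, with the $n=2$ case serving as the key algebraic step and the inductive step essentially folding variables together one at a time.

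First I would handle $n=1$, which gives equality trivially. For $n=2$, I would establish the key sublemma: for $a,b\geq 0$ with $a+b\leq 1$,
\begin{equation*}
  \frac{1-a}{1+a}\cdot\frac{1-b}{1+b} \;\geq\; \frac{1-(a+b)}{1+(a+b)}.
\end{equation*}
Since both denominators are positive under the constraint $a+b\leq 1$, I would cross-multiply and show that the difference $(1-a)(1-b)(1+a+b) - (1-a-b)(1+a)(1+b)$ simplifies to $2ab(a+b)$, which is manifestly nonnegative. This is a direct polynomial expansion; the cancellations are clean because the $1,\,a,\,b,\,a^2,\,b^2,\,ab$ terms all match between the two sides, leaving only the cubic contribution.

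For the inductive step, assume the claim holds for $n-1$ variables, and let $\beta \triangleq \sum_{j=1}^{n-1}\alpha_j$. Then by the inductive hypothesis (applicable since $\beta\leq \beta+\alpha_n\leq 1$),
\begin{equation*}
  \prod_{j=1}^{n-1}\frac{1-\alpha_j}{1+\alpha_j} \;\geq\; \frac{1-\beta}{1+\beta} \;\geq\; 0,
\end{equation*}
so multiplying both sides by the nonnegative quantity $(1-\alpha_n)/(1+\alpha_n)$ preserves the inequality:
\begin{equation*}
  \prod_{j=1}^{n}\frac{1-\alpha_j}{1+\alpha_j} \;\geq\; \frac{1-\beta}{1+\beta}\cdot\frac{1-\alpha_n}{1+\alpha_n}.
\end{equation*}
Applying the $n=2$ sublemma to $\beta$ and $\alpha_n$ (whose sum equals $\sum_{j=1}^n\alpha_j\leq 1$) then yields the desired bound.

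The only real obstacle is the $n=2$ algebra, and it resolves cleanly once one notices that the two expanded products share all terms up to degree two. A minor side check is ensuring nonnegativity of the intermediate factor $(1-\beta)/(1+\beta)$ so that the induction step does not flip the inequality, but this follows immediately from $\beta\leq 1$.
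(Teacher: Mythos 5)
Your proof is correct and follows essentially the same route as the paper's: verify $n=1$ and $n=2$ directly (with the same $2ab(a+b)$ numerator after clearing denominators), then induct by collapsing the first $n-1$ terms via the hypothesis and applying the two-variable case to $\beta$ and $\alpha_n$. No substantive differences.
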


\begin{proof}[Proof of Theorem~\ref{th:1lvl}]
  Applying (\ref{eq:aggr22})(\ref{eq:w2})(\ref{eq:b2}) in sequence, and utilizing the fact that $f_{\mathrm{\scriptscriptstyle II},i}=1$, we have
  \begin{equation}
    w_\mathrm{\scriptscriptstyle II} \leq w_{\mathrm{\scriptscriptstyle II},i} = \log\left(1-b_{\mathrm{\scriptscriptstyle II},i}\right)
    =\log\left(1-\frac{2 \cdot b \cdot \left(x_i-0.5\right)}{1-0.5 \cdot b+b \cdot \left(x_i-0.5\right)}\right)
    =\log\frac{1-b \cdot x_i}{1-b+b \cdot x_i}
    \label{eq:pw2}
  \end{equation}
  We have dropped the subscript from $b_i$ since we assume uniform values.
  Now let's examine $w_\mathrm{\scriptscriptstyle I}$.
  It is a nested function of $w_{\mathrm{\scriptscriptstyle I},j}$'s that is similar to (\ref{eq:nest1}).
  For any specific input ${\bf t}$, however, it is the sum of a subset of $w_{\mathrm{\scriptscriptstyle I},j}$'s.
  Let $\Lambda_{\bf t}$ denote the subtask-indices of this subset, and note that $\Lambda_{\bf t}$ varies for different ${\bf t}$ and that it only includes nonzero $w_{\mathrm{\scriptscriptstyle I},j}$'s.
  Fractal aggregation ensures that the subtasks in $\Lambda_{\bf t}$ have been trained on disjoint subsets of data with label I, and we know that $i\notin \Lambda_{\bf t}$.
  Hence, applying (\ref{eq:w1})(\ref{eq:b1}),
  \begin{align}
    w_\mathrm{\scriptscriptstyle I} =& \sum_{j\in\Lambda_{\bf t}} w_{\mathrm{\scriptscriptstyle I},j}
    = \sum_{j\in\Lambda_{\bf t}}\log\left(1- \frac{2 b f_{\mathrm{\scriptscriptstyle I},j} \left(0.5-x_j\right)}
    {1-0.5 b+b f_{\mathrm{\scriptscriptstyle I},j} \left(0.5-x_j\right)} \right) \\
    \mathrm{where}&\,\, \sum_{j\in\Lambda_{\bf t}}f_{\mathrm{\scriptscriptstyle I},j} \leq 1-f_{\mathrm{\scriptscriptstyle I},i} < 1 \label{eq:flimit}
  \end{align}
  Because $w_\mathrm{\scriptscriptstyle I}$ is an increasing function with respect to each $x_j$, we can replace $x_j$ with the lower bound:
  \begin{equation}
    w_\mathrm{\scriptscriptstyle I} \geq \sum_{j\in\Lambda_{\bf t}}\log\left(1- \frac{2 b f_{\mathrm{\scriptscriptstyle I},j} \left(0.5-1+x_i\right)}
    {1-0.5 b+b f_{\mathrm{\scriptscriptstyle I},j} \left(0.5-1+x_i\right)} \right)
    =\log\prod_{j\in\Lambda_{\bf t}}\frac{1 - \frac{x_i-0.5}{1-0.5b} \cdot b \cdot f_{\mathrm{\scriptscriptstyle I},j}}
    {1 + \frac{x_i-0.5}{1-0.5b} \cdot b \cdot f_{\mathrm{\scriptscriptstyle I},j}}
  \end{equation}
  Applying Lemma~\ref{th:lemma} and (\ref{eq:flimit})(\ref{eq:pw2}), we get
  \begin{equation}
    w_\mathrm{\scriptscriptstyle I} \geq \log\frac{1 - \frac{x_i-0.5}{1-0.5b} \cdot b \cdot\sum_{j\in\Lambda_{\bf t}}f_{\mathrm{\scriptscriptstyle I},j}}
    {1 + \frac{x_i-0.5}{1-0.5b} \cdot b \cdot\sum_{j\in\Lambda_{\bf t}}f_{\mathrm{\scriptscriptstyle I},j}}
    > \log\frac{1 - \frac{x_i-0.5}{1-0.5b} \cdot b \cdot 1}
    {1 + \frac{x_i-0.5}{1-0.5b} \cdot b \cdot 1}
    = \log\frac{1-b \cdot x_i}{1-b+b \cdot x_i} \geq w_\mathrm{\scriptscriptstyle II}
  \end{equation}
\end{proof}

There are more don't-care conditions than Theorem~\ref{th:1lvl}.
Intuitively, Theorem~\ref{th:1lvl} stands because fractal aggregation derives evidence against label II from the entire training data with label II, while it derives evidence against label I from a proper subset of data with label I.
Similar conditions are illustrated in Figure~\ref{fig:dc} for a three-level fractal structure.
More exist for structures with deeper levels.

\begin{wrapfigure}{r}{0.45\columnwidth}
  \centering
  \includegraphics[width=0.4\columnwidth]{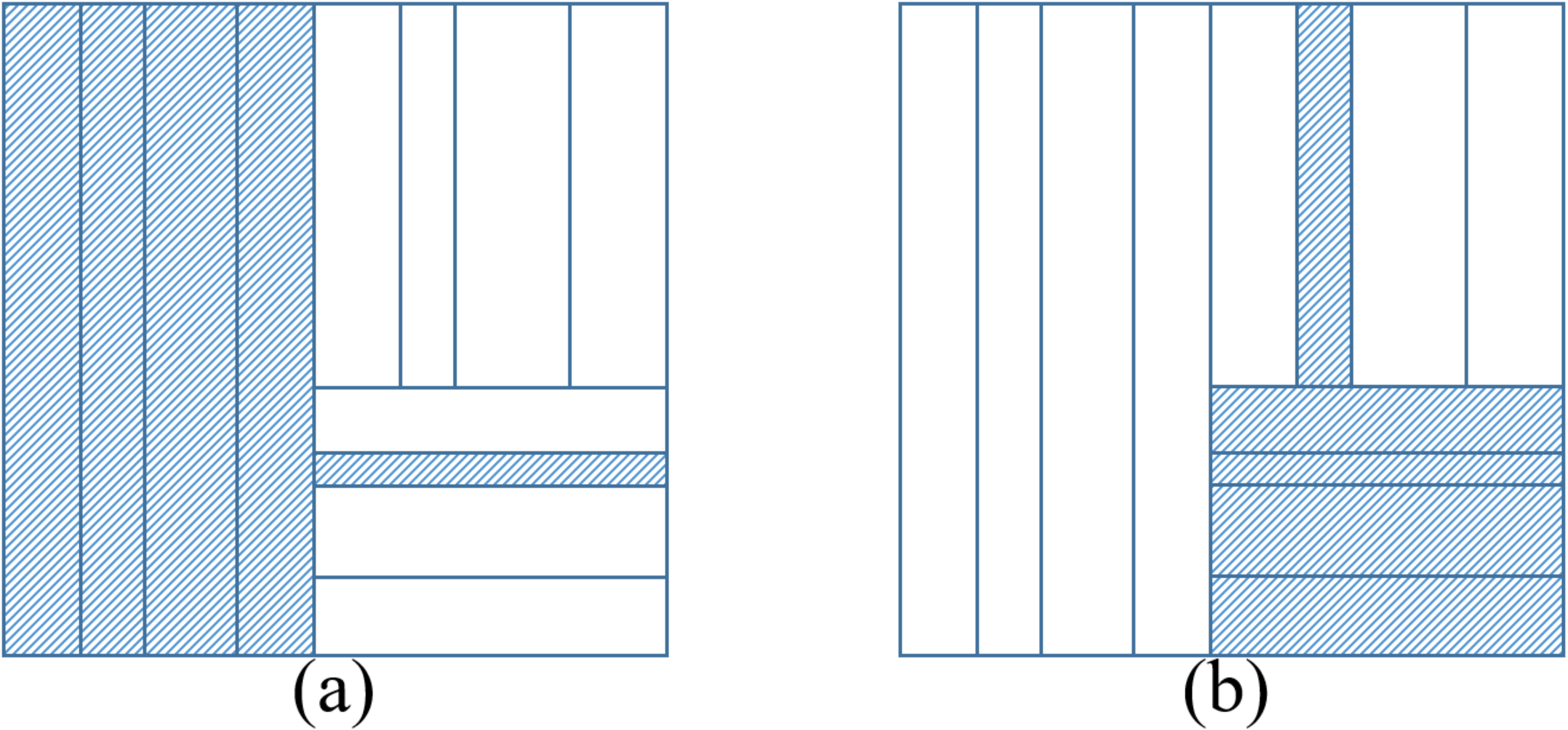}
  \caption{Approximate don't-care conditions for an input with (a) label II and (b) label I. The shaded classifiers need to be correct.}
  \label{fig:dc}
  \vspace{-5pt}
\end{wrapfigure}

Unfortunately, we do not have theorems for conditions in Figure~\ref{fig:dc}.
A proof would boil down to the following inequality, which has the role of Lemma~\ref{th:lemma} for them.
\begin{equation}
  \prod_{j=1}^{n_1} \frac{1-\alpha_j}{1+\alpha_j} > \prod_{j=1}^{n_2} \frac{1-\beta_j}{1+\beta_j} \label{eq:dc}
\end{equation}
where $\alpha_j \geq 0$, $1\leq j \leq n_1$; $\beta_j \geq 0$, $1\leq j \leq n_2$; $\sum_{j=1}^{n_1} \alpha_j < \sum_{j=1}^{n_2} \beta_j \leq 1$.
(\ref{eq:dc}) holds most of the time yet there is no guarantee.
However, it is highly unlikely that an ensemble that satisfies one of the conditions in Figure~\ref{fig:dc} for an input would predict the wrong label: many subtask classifiers would have to be wrong simultaneously, and the fractions $f_{\mathrm{\scriptscriptstyle I},j}$'s and $f_{\mathrm{\scriptscriptstyle II},j}$'s would have to be just right to violate (\ref{eq:dc}).
It is unlikely even under adversarial attacks, and hence, for practical purposes, those in Figure~\ref{fig:dc} are approximate don't-care conditions.
For empirical verification, we measured the fractal classifier for MNIST 4-9 on the training images: because they are partitioned, we know which image is assigned to which don't care condition.
66.6\% of training images of 4 are classified robustly: 37.9\% are due to the 1st-level exact don't care condition, and 28.7\% are due to the condition illustrated in Figure~\ref{fig:dc}(b).
63.2\% of training images of 9 are classified robustly: 45.8\% are due to the condition illustrated in Figure~\ref{fig:dc}(a), and 17.4\% are due to another condition where all 1st-level and 3rd-level classifiers are correct.

If a fractal classifier could satisfy a don't-care condition or an approximate don't-care condition for every training datum, it would be 100\% robust on the training set.
This could be achieved if each training subset from fractal divide could be classified robustly by an L2NNN, and detailed discussion is in the appendix.
In practice, however, not all training subsets can be classified robustly, -- even after fractal divide, some subtasks of robust classification are still too difficult for a monolithic L2NNN.
In addition, there is generalization gap in robustness.
For example, the fractal classifier for MNIST 4-9 is 64.9\% robust on the training set and 60.2\% robust on the test set.

\section{Experiments} \label{sec:results}

Pre-trained fractal models are at this dropbox:\\
{\small\url{https://www.dropbox.com/sh/qpgy297vok7xwxb/AACeo3Ih-cYWheS7AuFfa0jba}}

\begin{table}[t]
\caption{Accuracies on natural and adversarial test images of 4 and 9 where the $L_2$-norm limit of distortion is 2.}
\label{tbl:49}
\centering
\begin{tabular}{lcccccc}
\toprule
                & Natural & PGD    & BA     & CW     & SCW    & Best-attack \\
\midrule
Vanilla          & 99.7\% & 48.8\% &    0\% &    0\% &    0\% &    0\% \\
\citeauthor{mit} & 98.6\% & 97.8\% & 10.6\% & 47.8\% & 17.6\% &  1.3\% \\
\citeauthor{kolter} & 98.9\% & 97.7\% & 15.9\% & 60.4\% & 28.3\% & 12.0\% \\
Adv training ($L_2$ $\varepsilon=2$) &  98.4\% & 87.2\% & 57.4\% & 52.7\% & 52.6\% & 52.1\% \\
L2NNN            & 99.1\% & 94.4\% & 42.7\% & 41.3\% & 41.3\% & 41.3\% \\
NBR              & 99.1\% & 92.5\% & 57.2\% & 55.5\% & 55.3\% & 55.3\% \\
Fractal          & 99.1\% & 97.4\% & 62.0\% & 97.8\% & 70.3\% & 60.2\% \\
\bottomrule
\end{tabular}
\end{table}

\begin{table}[t]
\caption{Accuracies on natural and adversarial test images of ``pullover'' and ``coat'' where the $L_2$-norm limit of distortion is 1.5.}
\label{tbl:24}
\centering
\begin{tabular}{lcccccc}
\toprule
                & Natural & PGD    & BA     & CW     & SCW    & Best-attack \\
\midrule
Vanilla          & 92.0\% &  3.5\% &    0\% &  0.6\% &    0\% &    0\% \\
Adv training ($L_\infty$ $\varepsilon=8/255$) &  91.0\% & 54.3\% & 25.4\% & 19.2\% & 18.7\% & 18.6\% \\
Adv training ($L_2$ $\varepsilon=1.5$)       &  82.3\% & 67.6\% & 55.7\% & 54.0\% & 53.7\% & 53.6\% \\
L2NNN            & 89.6\% & 48.6\% & 30.4\% & 27.6\% & 26.8\% & 26.6\% \\
Fractal          & 89.3\% & 65.1\% & 41.8\% & 75.1\% & 50.9\% & 39.6\% \\
\bottomrule
\end{tabular}
\end{table}

\begin{table}[t]
\caption{Comparison of aggregation methods on 4-9 and ``pullover''-``coat''.}
\label{tbl:ablation}
\centering
\begin{tabular}{lcccc}
  \toprule
        & \multicolumn{2}{c}{4-9} & \multicolumn{2}{c}{``pullover''-``coat''} \\
        & Natural & Best-attack & Natural & Best-attack \\
\midrule
Fractal              & 99.1\% & 60.2\% & 89.3\% & 39.6\% \\
Markov random field  & 99.0\% & 55.8\% & 83.0\% & 30.8\% \\
Gaussian naive Bayes & 99.0\% & 56.4\% & 87.8\% & 35.7\% \\
\bottomrule
\end{tabular}
\end{table}

\subsection{Evaluation setup} \label{sec:setup}

As discussed in Section~\ref{sec:bg}, the appropriate $L_2$-norm limit for robustness measurement is 2 for MNIST and 1.5 for Fashion-MNIST.
Measurement is by running four attacks: projected gradient descent (PGD) \cite{mit}, boundary attack (BA) \cite{boundary}, Carlini \& Wagner (CW) attack \cite{carlini} and seeded CW (SCW).
Foolbox \cite{foolbox} is used for PGD and BA; CW is original code from \cite{carlini}; SCW is a CW variant that starts its search from a transfer attack.
Iteration limit is 100 for PGD, 50K for BA, and 10K for CW and SCW.
For the transfer attack that seeds SCW, we derive a surrogate model from a fractal classifier by reducing $s_i$ parameters in (\ref{eq:x}) to around 5 and attack the surrogate with CW; the rationale is to get around any vanishing-gradient problem that CW might have.
We do the same for SCW on an NBR model, and for SCW on other competitors we simply use the same seeds as the fractal model.
A classifier is considered robust on an image if it remains correct under all four attacks.

The MNIST models from \cite{mit,kolter,l2nnn,nbr} are publicly available.
For Table~\ref{tbl:49}, we simply use their two logits for 4 and 9 to form binary classifiers.
The adversarially trained models are built by adapting the code of \cite{mit}, details in the appendix.


\subsection{Binary classifiers}

Let's start with 4 versus 9, the most challenging pair of digits in MNIST, and ``pullover'' versus ``coat'', which is one of the challenging pairs of labels in Fashion-MNIST.

The fractal 4-9 classifier has three levels and 17 subtasks: 6 in the first level, 5 in the second and 6 in the third.
The accuracies of subtask classifiers on natural images range from 66.8\% to 92.9\%.
None of them is a good classifier, and yet the ensemble classifier is both accurate and robust.

The fractal ``pullover''-``coat'' classifier has three levels and 23 subtasks: 9 in the first level, 8 in the second and 6 in the third.
The accuracies of subtask classifiers range from 52.7\% to 64.8\%.
This again shows the effectiveness of fractal aggregation in building a reliable ensemble out of highly unreliable knowledge sources.
The L2NNN model in Table~\ref{tbl:24} is trained as if it's a single subtask that has been assigned all training data with the two labels.

Table~\ref{tbl:ablation} present empirical comparisons between fractal aggregation and other ensemble methods, by using Markov random fields (MRF) and Gaussian naive Bayes (GNB) to combine the same subtask classifiers.
Both MRF and GNB use (\ref{eq:x}) as features.
GNB needs no training, and MRF's weight parameters are trained with the loss function in \cite{nbr} for training $b_i$'s.
As mentioned, fractal aggregation like (\ref{eq:nest1})(\ref{eq:nest2}) can be viewed as a nested formula of and's and or's with uncertainty.
MRF and GNB are capable of emulating one-level fuzzy-and or fuzzy-or but are unable to express the fine structures in fractal aggregation.

\begin{table}[h]
\caption{Accuracies on natural and adversarial test images of MNIST where the $L_2$-norm limit of distortion is 2.}
\label{tbl:mnist}
\centering
\begin{tabular}{lcccccc}
\toprule
                & Natural & PGD    & BA     & CW     & SCW    & Best-attack \\
\midrule
Vanilla          & 99.1\% & 52.9\% &    0\% &    0\% &    0\% &    0\% \\
\citeauthor{mit} & 98.5\% & 97.1\% &  9.4\% & 59.5\% & 26.5\% &  4.8\% \\
\citeauthor{kolter} & 98.8\% & 97.0\% & 18.0\% & 70.4\% & 42.7\% & 13.8\% \\
Adv training ($L_2$ $\varepsilon=2$) &  98.7\% & 92.0\% & 76.8\% & 74.1\% & 72.9\% & 69.8\% \\
L2NNN            & 98.2\% & 94.4\% & 82.0\% & 66.5\% & 66.3\% & 66.2\% \\
NBR-45           & 99.0\% & 96.7\% & 83.2\% & 97.5\% & 58.0\% & 57.9\% \\
Fractal          & 99.0\% & 97.2\% & 81.5\% & 97.6\% & 76.5\% & 70.0\% \\
\bottomrule
\end{tabular}
\end{table}

\begin{table}[h]
\caption{Accuracies on natural and adversarial test images of Fashion-MNIST where the $L_2$-norm limit of distortion is 1.5.}
\label{tbl:fashion}
\centering
\begin{tabular}{lcccccc}
\toprule
                & Natural & PGD    & BA     & CW     & SCW    & Best-attack \\
\midrule
Vanilla          & 90.6\% & 29.1\% &  3.4\% &    0\% &    0\% &    0\% \\
Adv training ($L_\infty$ $\varepsilon=8/255$) &  91.3\% & 60.6\% & 51.7\% & 32.1\% & 30.5\% & 29.7\% \\
Adv training ($L_2$ $\varepsilon=1.5$)       &  85.2\% & 73.3\% & 70.1\% & 63.5\% & 63.0\% & 62.2\% \\
NBR-45           & 90.1\% & 63.0\% & 62.0\% & 63.7\% & 41.9\% & 41.6\% \\
Fractal          & 90.0\% & 77.8\% & 66.8\% & 81.8\% & 63.8\% & 54.5\% \\
\bottomrule
\end{tabular}
\end{table}

\subsection{Full MNIST and Fashion-MNIST classifiers}

Tables~\ref{tbl:mnist} and \ref{tbl:fashion} present the full multiclass classification results on MNIST and Fashion-MNIST.
There are no NBR classifiers available and therefore we build NBR-45 models as additional baselines in the tables.
An NBR-45 classifier has one L2NNN subtask for each pair of labels and hence has 45 subtasks.
Fractal classifiers differ from NBR-45 in that certain subtasks are replaced by fractal structures, and the chosen ones are those that classify challenging pairs of labels.
For MNIST, we replaced the four subtasks for 3-5, 3-8, 4-9 and 7-9, and each is replaced by multiple subtasks that form a fractal binary classifier.
Consequently, the number of subtasks in the overall fractal MNIST classifier is increased to 104.
For Fashion-MNIST, we replaced six subtasks: ``t-shirt/top''-``shirt'', ``pullover''-``coat'', ``pullover''-``shirt'', ``coat''-``shirt'', ``sandal''-``sneaker'' and ``sneaker''-``ankle boot''.
The number of subtasks in the overall fractal Fashion-MNIST classifier is increased to 107.
The subtask classifiers that are not replaced are identical between NBR-45 and fractal classifiers.
Simply put, the comparison between NBR-45 and fractal classifiers in both tables represent the effect of fractal divide and fractal aggregation.


Tables~\ref{tbl:mnist} and \ref{tbl:fashion} show that the fractal classifiers achieve excellent robustness with low cost to natural accuracy.
The MNIST model in \cite{towards} is not differentiable and cannot be evaluated in the same way as in Table~\ref{tbl:mnist}.
Instead we compare with their reported numbers: natural accuracy is 99\% and robust accuracy is 80\% with $L_2$ distortion limit of 1.5.
For $L_2$ limit of 1.5, the fractal MNIST classifier has a robust accuracy of 85.9\%.
Since the fractal model is differentiable, our 85.9\% is likely after more scrutiny from attacks than their 80\%.


\begin{table}[t]
\caption{Predicted robustness as a function of assumed upper bound of local Lipschitz constants.}
\label{tbl:provable}
\centering
\begin{tabular}{lcccc}
  \toprule
       & \multicolumn{2}{c}{MNIST (radius 2)} & \multicolumn{2}{c}{Fashion-MNIST (radius 1.5)} \\
Bound  & Prediction & Correctness & Prediction & Correctness \\
\midrule
1      &  2.1\% &  100\% &  8.5\% & 100\%  \\
0.9    &  7.2\% &  100\% & 13.5\% & 100\%  \\
0.8    & 17.7\% &  100\% & 20.0\% & 100\%  \\
0.7    & 36.9\% & 99.8\% & 28.2\% & 99.9\%  \\
0.6    & 57.4\% & 96.6\% & 38.2\% & 98.2\%  \\
0.5    & 73.7\% & 89.0\% & 49.2\% & 93.0\%  \\
\bottomrule
\end{tabular}
\end{table}

\begin{wrapfigure}{r}{0.35\columnwidth}
\centering
\includegraphics[width=0.15\columnwidth]{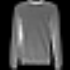}
\includegraphics[width=0.15\columnwidth]{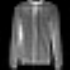}
\caption{Original image and the adversarial image that fools the ``pullover''-``coat'' classifier but not the multiclass classifier.}
\label{fig:adv}
\end{wrapfigure}

An interesting observation is that an adversarial example for a binary classifier is not necessarily one for a multiclass classifier.
Out of the 949 adversarial examples found by boundary attack on the fractal ``pullover''-``coat'' classifier, the full Fashion-MNIST classifier gives the correct answer on 99 of them.
This may seem counterintuitive because the very same binary classifier is inside the multiclass classifier.
Figure~\ref{fig:adv} shows one such example: the original image is a pullover, and boundary attack adds a distortion with $L_2$-norm of 1.43 that seems to mimic a zipper.
The binary classifier's $w$ values are -0.89 for ``pullover'' and -1.58 for ``coat'' on the original, and both become -1.39 on the adversarial example.
However, the ``coat''-``shirt'' fractal ensemble calculates a $w$ value of -1.62 for ``coat'' on the adversarial example, and consequently, in the full multiclass classifier, ``pullover'' is still the label with the least evidence against it.
This creates more obstacles for attacks and is a benefit from the simple scheme of (\ref{eq:multiclass}).

Finally, let us quantify the robustness guarantees of fractal models.
Due to the bounded Lipschitz constant of L2NNN components and the don't care conditions of fractal aggregation, every prediction comes with a robustness $L_2$ radius.
Recall that we can increase/decrease L2NNN outputs by an adversarial offset to simulate the worst-case scenario of an attack, and hence a lower bound of the robustness radius around an input can be measured by the threshold of adversarial offset that changes the classification.
The actual robustness radius depends on local Lipschitz constants \cite{hein} and for L2NNNs they can be substantially below 1 \cite{l2nnn}.
If we know an upper bound of local Lipschitz constants, the robustness radius is no less than the threshold of adversarial offset divided by the upper bound.
Table~\ref{tbl:provable} shows the predicted robustness levels by assuming different upper bound values.
We measure the correctness of these predictions by comparing the list of test images with predicted robust classification against the test images from Tables~\ref{tbl:mnist} and \ref{tbl:fashion} on which fractal models actually remain correct after all four attacks.
Table~\ref{tbl:provable} suggests that it is safe to assume the Lipschitz upper bound of 0.7.
With this, the MNIST model has 36.9\% provable robustness and the Fashion-MNIST model has 28.2\%.
Note that such guarantees do not exist for models based on adversarial training.

\section{Conclusions and future work}

This paper proposes fractal divide and fractal aggregation for robust classification.
Together they form an ensemble approach that achieves better robustness-accuracy trade-off than monolithic models and has a unique property that ensures robustness under certain conditions.
The techniques are demonstrated on MNIST and Fashion-MNIST with new state of the art in adversarial robustness by $L_2$ metric.
Substantial potential exists in improving subtask classifiers, including adversarial training and better nonexpansive architectures.
They are also key in expanding to other metrics and to other applications.

\nocite{fashion}
\nocite{lecun1998mnist}
\nocite{mitpackage}

\bibliographystyle{plainnat}
\bibliography{fractal}

\appendix

\numberwithin{equation}{section}

\section{Proof of Lemma 1}

\begin{proof}
  If $n=1$, the inequality is an equality and is true.
  If $n=2$, the difference between the two sides is
  \begin{align}
    & \frac{1-\alpha_1}{1+\alpha_1} \cdot \frac{1-\alpha_2}{1+\alpha_2} - \frac{1-\alpha_1-\alpha_2}{1+\alpha_1+\alpha_2} \nonumber\\
    =& \frac{\left( 1-\alpha_1 \right)\cdot\left( 1-\alpha_2 \right)\cdot\left( 1+\alpha_1+\alpha_2 \right)}
    {\left( 1+\alpha_1 \right)\cdot\left( 1+\alpha_2 \right)\cdot\left( 1+\alpha_1+\alpha_2 \right)}
     - \frac{\left( 1+\alpha_1 \right)\cdot\left( 1+\alpha_2 \right)\cdot\left( 1-\alpha_1-\alpha_2 \right)}
    {\left( 1+\alpha_1 \right)\cdot\left( 1+\alpha_2 \right)\cdot\left( 1+\alpha_1+\alpha_2 \right)} \nonumber\\
    =& \frac{2\alpha_1^2\alpha_2+2\alpha_1\alpha_2^2}
    {\left( 1+\alpha_1 \right)\cdot\left( 1+\alpha_2 \right)\cdot\left( 1+\alpha_1+\alpha_2 \right)} \nonumber\\
    \geq& 0 \nonumber
  \end{align}
  Therefore, Lemma 1 holds true for $n=2$.

  Suppose Lemma 1 holds true for $n \leq k$.
  For $n = k+1$, let's first apply Lemma 1 on the first $k$ factors on the left-hand side, and then apply Lemma 1 with $n=2$:
  \begin{align}
    \prod_{j=1}^{k+1} \frac{1-\alpha_j}{1+\alpha_j} &\geq \frac{1-\sum_{j=1}^k\alpha_j}{1+\sum_{j=1}^k\alpha_j} \cdot \frac{1-\alpha_{k+1}}{1+\alpha_{k+1}} \nonumber\\
    &\geq \frac{1-\sum_{j=1}^{k+1}\alpha_j}{1+\sum_{j=1}^{k+1}\alpha_j} \nonumber
  \end{align}
  Therefore, Lemma 1 also holds true for $n = k+1$.
  By induction, Lemma 1 is true for any $n$.
\end{proof}

\section{Robustness on training set}

If a fractal classifier could satisfy a don't-care condition or an approximate don't-care condition for every training datum, it would be 100\% robust on the training set.
This could be achieved if each training subset from fractal divide could be classified robustly by an L2NNN.

Let's state the (approximate) don't-care conditions more formally.
For an input with label I, a condition is that one subtask classifier in an odd level classifies it robustly and that all subtask classifiers in the preceding even levels classify it robustly; if the total number of levels is even, a condition can also be that all subtask classifiers in all even levels classify it robustly.
For an input with label II, a condition is that one subtask classifier in an even level classifies it robustly and that all subtask classifiers in the preceding odd levels classify it robustly; if the total number of levels is odd, a condition can also be that all subtask classifiers in all odd levels classify it robustly.

As discussed in Section 3.2, any pair of training data with opposite labels appears in one and only one training subset.
Therefore, a training datum ${\bf t}$ with label I is contrasted against every training datum with label II, and the contrast is distributed among a set of subtasks.
In Figure 1, this set of subtasks correspond to a vertical stack of rectangles that collectively cover a vertical line at the x-coordinate that corresponds to ${\bf t}$.
The stack varies for different ${\bf t}$, but it always matches one of the don't-care conditions: it is composed of either one subtask in an odd level and all subtasks in the preceding even levels, or, if the total number of levels is even, all subtasks in all even levels.
If every training subset from fractal divide is classified robustly by the corresponding L2NNN, then this stack is a set of subtask classifiers that have ${\bf t}$ in their training subset and hence classify ${\bf t}$ robustly.
Therefore, one of the don't-care conditions is satisfied for any ${\bf t}$ with label I.
The argument for label II is similar.

In practice, not all training subsets can be classified robustly, -- even after fractal divide, some subtasks of robust classification are still too difficult for a single L2NNN.
In addition, there is generalization gap in robustness.
For example, the fractal classifier for MNIST 4-9 is approximately 64.9\% robust on the training set and 60.2\% robust on the test set.

\section{Implementation details of adversarial training}

There are no publicly available pre-trained MNIST or Fashion-MNIST classifiers that are adversarially trained with an $L_2$ adversary, or Fashion-MNIST classifier that are adversarially trained with an $L_\infty$ adversary.
So we built these baselines ourselves by adapting the code of [12].
For the $L_2$ adversary, we modified the PGD part of the code of [12] to be the TensorFlow equivalent of the corresponding PyTorch code in [7].
The $\varepsilon$ values used are listed in the tables in Section 4.
The step sizes are $2.5\cdot\varepsilon /\text{number-of-steps}$ as in [7].
All other hyperparameters and settings are identical to the original code of [12].

\end{document}